
\documentclass[sigconf]{acmart}
\usepackage[ruled,linesnumbered]{algorithm2e}
\usepackage{amsmath}
\usepackage{amsfonts}
\usepackage{graphicx}
\usepackage{subcaption}
\usepackage{tabularx}
\usepackage{multirow}
\AtBeginDocument{%
  }
\renewcommand\footnotetextcopyrightpermission[1]{}
\author{Sihan Bai}
\settopmatter{printacmref=false} 

\begin{document}

\title{Pairwise Similarity Distribution Clustering for Noisy Label Learning}

\begin{abstract}
Noisy label learning aims to train deep neural networks using a large amount of samples with noisy labels, whose main challenge comes from how to deal with the inaccurate supervision caused by wrong labels. Existing works either take the label correction or sample selection paradigm to involve more samples with accurate labels into the training process. In this paper, we propose a simple yet effective sample selection algorithm, termed as Pairwise Similarity Distribution Clustering~(PSDC), to divide the training samples into one clean set and another noisy set, which can power any of the off-the-shelf semi-supervised learning regimes to further train networks for different downstream tasks. Specifically, we take the pairwise similarity between sample pairs to represent the sample
structure, and the Gaussian Mixture Model~(GMM) to model the similarity distribution between sample pairs belonging to the same noisy cluster, therefore each sample can be confidently divided into the clean set or noisy set. Even under severe label noise rate, the resulting data partition mechanism has been proved to be more robust in judging the label confidence in both theory and practice. Experimental results on various benchmark datasets, such as CIFAR-10, CIFAR-100 and Clothing1M, demonstrate significant improvements over state-of-the-art methods.
\end{abstract}

\keywords{Deep Neural Network, Noisy Label Learning, Pairwise Similarity Distribution}

\maketitle

\section{Introduction}

\begin{figure}[t]
	\begin{center}
		\includegraphics[width=0.95\linewidth]{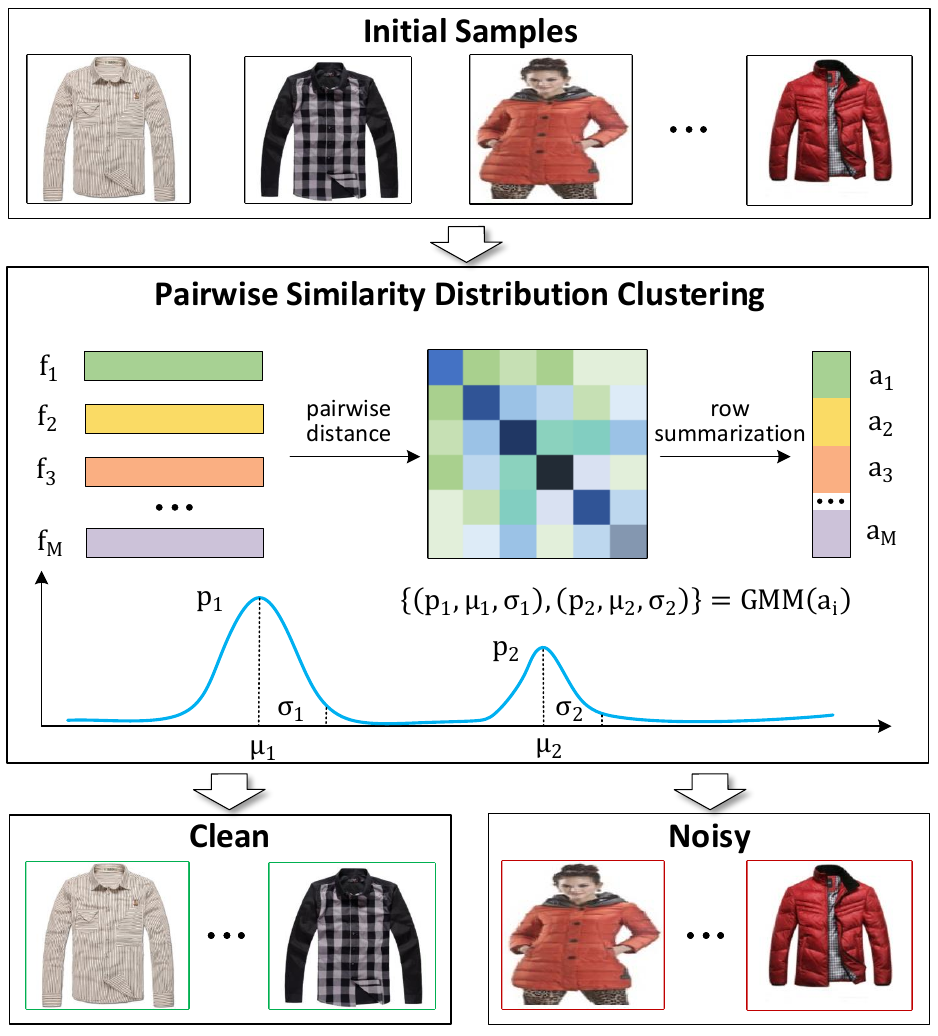}
	\end{center}
	\caption{Illustration of sample selection through pairwise similarity distribution clustering. For each group with the same label, we first calculate the cosine distance between all sample pairs, then summarize the distribution matrix by row and divide all the samples into two groups using gaussian mixture model.}
  \label{fig1}
\end{figure}

The remarkable success of deep learning is largely attributed to the training of Deep Neural Network~(DNN) using a large datasets with human annotated labels~\cite{Zhou_Wang_Wang:2017,He_Zhang_Ren:2016,Zhou_Wang_Wang:2023}. However, it is not only  labor-expensive but also time-consuming to label extensive data with high-quality annotations~\cite{Zhou_Wang_Shu:2021}.To overcome this problem, noisy label learning~\cite{Shu_Xie_Yi:2019} has been widely studied by worldwide researchers, which aims to train DNNs only with a large amount of samples with noisy labels. Because of the significant number of parameters, the DNNs are very easy to overfit the noisy labels by learning complex mapping functions~\cite{Song_Kim_Park:2022}. Therefore, how to involve more samples with accurate
labels into the training process has become a critical issue in noisy label learning.

In the past few years, extensive works have been done in the field of noisy label learning, which could be simply divided into two categories, \emph{i.e.}, label correction~\cite{Shu_Xie_Yi:2019,Zheng_Awadallah_Dumais:2021,Wang_Hua_Kodirov:2021} and sample selection~\cite{Wu_Shu_Xie:2021,Ren_Zeng_Yang:2018,Zhang_Wu_Chen:2020}. In particular, the label correction methods try to learn a transition matrix between network's predictions and noisy labels, in which the transition matrix will revise the wrong gradients indicated by noisy labels. For example, the Meta Soft Label Corrector~(MSLC)~\cite{Shu_Xie_Yi:2019} takes extra clean annotations to learn a transition matrix by utilizing the original targets and dynamic predictions. What's different, the sample selection methods aim to choose more samples with clean labels into the training process, in which the sample reweighting strategy is often used to drop out samples based on different priori knowledge. For example, the Self-Paced Robust Learning~(SPRL) algorithm~\cite{Zhang_Wu_Chen:2020}  trains the DNN in a process from more reliable samples to less reliable ones under the supervision of well labeled data. Even though great progress has been achieved in noisy label learning, all the existing methods still suffer from improving the quality of supervision under severe label noise rate.

Recently, the mainstream methods usually formulate the noisy label learning as a semi-supervised learning problem~\cite{Karim_Rizve_Rahnavard:2022,Li_Richard_Steven:2020}, in which the training samples are first divided into one clean set and another noisy set, and then the ones in clean set are taken for supervised learning while the ones in noisy set are taken for unsupervised learning. From this point of view, this line of methods actually belongs to the sample selection category. These methods~\cite{Karim_Rizve_Rahnavard:2022,Berthelot_Carlini_Goodfellow:2019,Wang_Li_Guo:2021} have achieved the state-of-the-art performance in noisy label learning, whose success is mainly because of it's a relatively easy thing to divide the training samples into one clean set and another noisy set, therefore the potential of samples with clean labels can be fully explored in the supervised learning step. For example, the recent U{\footnotesize{NI}}form selection and C{\footnotesize{ON}}trastive learning~(U{\footnotesize{NI}}C{\footnotesize{ON}}) method~\cite{Karim_Rizve_Rahnavard:2022}  takes the Jensen-Shannon Divergence~(JSD) as a metric to conduct data partition, in which the pureness of clean set can reach about 90\% in the training process. To further improve the performance, the key of these methods lies on how to accurately divide the training samples into clean set and noisy set.

To achieve this goal, different divergence metrics are formulated to construct the clean set and noisy set. For example, the Jensen-Shannon Divergence~(JSD)~\cite{Karim_Rizve_Rahnavard:2022}, Cross-Entropy~\cite{gui2021understanding,Li_Richard_Steven:2020} are widely used for data partition, where the samples with lower losses are divided into the clean set, while the samples with higher losses are divided into the noisy set. Some methods manually set a cut-off threshold between small and large losses~\cite{gui2021understanding}, others use clustering methods like Beta Mixture Model~(BMM)~\cite{Arazo_Ortego_Albert:2019} and Gaussian Mixture Model~(GMM)~\cite{Li_Richard_Steven:2020} or an automatically calculated threshold~\cite{Karim_Rizve_Rahnavard:2022} to automatically separate losses. 
Because these methods take the small loss criterion for data partition, they are very easy to overfitting noisy labels when the noise rate is high in the original training data. To alleviate this issue, two critical points need to be addressed in the resulting work:~(1) The adopted loss should be robust to different noise rates; and~(2) The divergence metric should have a wide adaptability to different rates of noisy labels.

In this paper, we propose a novel Pairwise Similarity Distribution Clustering~(PSDC) method for noisy label learning, which is effective to divide the training samples into one clean set and another noisy set. Therefore,  the resulting samples can be further taken to learn discriminative features representation in a supervised and unsupervised manner, respectively. In particular, we compute the pairwise similarity between sample pairs to represent the sample structure in each noisy cluster, which can act as important prior information for candidate sample separation. Because the pairwise sample structure has no direct relationship with noisy labels, it can overcome the drawback of small loss criterion in data partition. Besides, we take the Gaussian Mixture Model~(GMM) to model the similarity distribution between sample pairs belonging to the same noisy cluster, therefore each sample can be confidently divided into the clean set or noisy set. As shown in Figure~\ref{fig1}, the samples of ``shirt'' can be picked out to learn a discriminative feature representation in supervised manner, while the samples not belonging to ``shirt' can be taken to learn a robust feature representation in an unsupervised manner. Even under severe specific types of label noise rate, we have proved that the resulting data partition mechanism is robust in judging the label confidence in both theory and practice.

The main contributions of this work is highlighted as follows:
\begin{itemize}
    \item We propose a novel Pairwise Similarity Distribution Clustering method for noisy label learning, which takes the pairwise sample structure and Gaussian Mixture Model to improve the accuracy of data partition.
    \item We present a clear theoretical analysis to Jensen-Shannon Divergence, Cross-Entropy Criterion and Gaussian Mixture Model, which indicates that our method has a wide tolerance range to noise rate. 
\item We conduct extensive experiments on CIFAR-10, CIFAR-100 and Clothing1M datasets with different types and noise rates, and achieve the state-of-the-art results.
\end{itemize}

\section{related work}
In this section, we briefly review the related works from two aspects, \emph{i.e.}, label correction and sample selection, which are introduced in the following paragraphs.

\textbf{Label Correction}. 
For label correction methods, a noisy label is usually refurbished to prevent network overfitting to false labels. For example, Bootstrapping~\cite{reed2015training} first proposes the concept of label correction to update the target labels of samples in the training process. Recently, the iterative methods, such as Joint Optimization Framework~(JOF)~\cite{tanaka2018joint} and Online Label Smoothing~(OLS)~\cite{Zhang_Jiang_Hou:2021}, relabel samples based on the predictions on networks. What's different, the loss correction~\cite{Patrini2016MakingDN} methods often estimate the noise transition matrix, which represents the probabilities that clean labels flip into noisy labels. For example, the Gold Loss Correction~(GLC)~\cite{hendrycks2018glc} estimates the noise transition matrix using a set of training samples with clean labels, whose performance is closely related with the initial representation of network. To alleviate this issue, the Meta Label Correction~(MLC)~\cite{zheng2021meta} introduces a meta learning framework to learn the noise transition matrix, in which an adaptive meta corrector is learned to cope with the enhancement of representation ability in the whole training process. What's more, some other works focus on how to involve more samples with noisy labels into the training process. For example, work in~\cite{pmlr-v97-song19b} selectively refurbish and exploit those samples whose labels can be corrected with high precision, which can prevent the risk of noise accumulation by gradually increasing the number of training samples. Besides, Self-Ensemble Label Correction~(SELC)~\cite{lu2022selc} uses ensemble predictions formed by an exponential moving average of network's outputs to update the original noisy labels. Self-Evolution Average Label~(SEAL)~\cite{chen2020classconditional} presents a simple yet effective algorithm on instance dependent noise, in which both theoretical and empirical evidences are given to show its robustness to instance dependent noise.

\textbf{Sample Selection}. For sample selection methods, different prior knowledge is usually modeled to choose more reliable samples to train networks. For example, most sample selection based methods take the phenomenon that DNN learns simple patterns before fitting label noise~\cite{article} as a prior knowledge, thererfore they first choose samples with samll losses to train network and then add more samples with large losses into the training process. What's different, Meta Weight Network~(MWN)~\cite{Shu_Xie_Yi:2019} tries to learn a sample selection criterion, in which a set of samples with clean labels are taken as meta data to learn how to choose samples with clean labels. Recently, the mainstream methods take semi-supervised learning~\cite{Berthelot_Carlini_Goodfellow:2019} and co-training technologies~\cite{coteaching} to train networks, in which the selected clean samples are treated as labeled data and noisy samples are treated as unlabeled data~\cite{ijcai2021p340}. For example, the well known DivideMix~\cite{Li_Richard_Steven:2020} models the losses of samples with GMM, which can dynamically divide the training data into a labeled set with clean samples and an unlabeled set with noisy samples. Besides, the recent U{\footnotesize{NI}}C{\footnotesize{ON}}~\cite{Karim_Rizve_Rahnavard:2022} uses Jensen-Shannon divergence to select samples, which has achieved the state-of-the-art results in noisy label learning. Although these methods have achieved the promising performance, they are sensitive to the training data which has a high rate of noisy labels. To the best of our knowledge, the underying reason is that the noisy labels are directly used to measure their own credibility, whose idea is mainly based on the phenomenon that networks first learn samples with clean labels and then learn samples with wrong labels~\cite{article}. Unlike the mainstream sample selection methods, we aim to explore the structure between sample pairs in data partition, and provide the insight why it is more robust than using noisy labes to divide samples into clean and noisy set.

\section{method}
\subsection{Preliminaries}
Let $\mathcal{X}$ denote the instance space, and $\mathcal{Y}$ denote the label space, such that for each instance $x\in\mathcal{X}$, there exists a corresponding label $y\in\mathcal{Y}$. We denote the noise-free dataset as $\mathbb{D}=\{\mathcal{X},\mathcal{Y}\}=\{(x_i,y_i)\}_{i=1}^{N}$, where $x_i$ represents an image, $y_i$ represents its corresponding label, and $N$ denotes the total number of training samples. Let us consider a measurable function $\mathcal{C}: \mathcal{X}\rightarrow \mathcal{Y}$, which maps the data samples to their corresponding real labels. Next, we consider datasets where the given label may be corrupted and the labeling is not entirely accurate. We base our approach on the class-conditional noise assumption $p(\tilde{y}|y,\boldsymbol{x})=p(\tilde{y}|y)$, where $\tilde{y}$ represents the corrupted label~\cite{condition}. In practice, we typically have a training dataset $\mathbb{\tilde{D}}=\{\mathcal{X},\mathcal{\tilde{Y}}\}=\{(x_i,\tilde{y}_i)\}_{i=1}^{N}$ where the label set defaults to the noise label set. \par

For a k-class classification problem, we begin by initializing a DNN model with a feature extractor $f(\bullet;\theta)$. After the feature extractor, there is a classification layer, $h : f(\mathcal{X};\theta) \rightarrow \mathbb{R}^{N \times k}$ and a projection head, $g : f(\mathcal{X};\theta)\rightarrow \mathbb{R}^{N \times m}$, where $m$ is the dimension; $g$ is used for contrastive learning. We minimize a loss function, $l: \mathbb{R}^{N\times k}\times \mathcal{\tilde{Y}}\rightarrow \mathbb{R}^{N}$ to train with the given labels on the training set $\tilde{\mathbb{D}}$. 
We face a sample selection problem where we need to partition a training set $\tilde{\mathbb{D}}$ into a clean subset, $\mathbb{D}_{clean}$, and a noisy subset, $\mathbb{D}_{noisy} = \tilde{\mathbb{D}}\backslash \mathbb{D}_{clean}$. Then, $\mathbb{D}_{clean}$ is used for supervised training, while $\mathbb{D}_{noisy}$ is utilized for unsupervised training without using the corresponding noisy ground-truth labels. This approach is a standard semi-supervised method where pseudo-labels are generated for the examples in $\mathbb{D}_{noisy}$.
This section introduces our proposed method for learning with noisy labels. It includes a sample selection module and a semi-supervised learning module, which are introduced in the following paragraphs.
\subsection{Pairwise Similarity Distribution Clustering}
Consider the features extracted by the backbone with a projection head $\mathbb{G} = g(f(\mathcal{X},\theta),\phi)$. We then partition $\mathbb{G}$ based on the given labels in $\mathcal{\tilde{Y}}$ as follows:
\begin{equation}
    \mathbb{G} = \{\mathbb{G}_i\}_{i=1}^{k},
\end{equation}
where $k$ denotes the number of classes, and the samples in each $\mathbb{G}_i$ share the same given label.
We approach the problem as a binary classification task for each $\mathbb{G}_i$, given that the label for each sample in the set is the same. Thus, we only need to consider the feature differences of each sample. Clean samples that belong to the label set depict the same thing, so they have similar features. However, noise samples that should not be in the set depict different things, thus they do not share similar features with clean samples. To gauge pairwise similarity of samples in the set $\mathbb{G}_i$, we calculate the cosine similarity and generate an affinity matrix defined as:
\begin{definition}
    \textbf{Affinity Matrix}: Let $A^i \in \mathbb{R}^{n\times n}, i=\{1,2,...,k\}$ denote the affinity matrix of $\mathbb{G}_i$, $i=1,2,...,k$ , where k is the class number of training dataset, n is the sample number of $\mathbb{G}_i$. Row $p$ of $A^i$ denotes the similarity measure between sample $x_p$ and other samples. 
\end{definition}
To facilitate the representation of theory, some concept is proposed here for noisy data:
\begin{definition}
    \textbf{Submerged}: Assume that the average affinity of  clean samples $a_p = \{a_{p_1},...,a_{p_n}\}$ are a sequence of i.i.d. random variables with expectation $\mu_p$ and variance $\sigma_{p}^{2}$, the average affinity of noisy samples $a_q = \{a_{q_1},...,a_{q_{n}}\}$are a sequence of independent random variables satisfying Liapunov Condition~\cite{adams2009life},
    expectation $\mu_q$ and variance $\sigma_{q}^{2}$. 
    if
    \begin{equation}
        \sum_{i=p_1}^{p_n} a_i < \sum_{i=q_1}^{q_n} a_i,
    \end{equation}
    then the clean samples are submerged by noisy samples. 
\end{definition}

Our research introduces the first theory, as described below:
\begin{theorem}\label{1}
Consider two pairs of samples, ${(x_p,\tilde{y}),(x_q,\tilde{y})}$, randomly selected from $\mathbb{G}_i$
, with their respective indices in the affinity matrix $A^i$ being $p$ and $q$. Given the following conditions:
\begin{enumerate}
\item $\mathcal{C}(x_p) \neq \tilde{y}$ and $\mathcal{C}(x_q) = \tilde{y}$;
\item clean samples are not submerged by noise samples;
\item clean samples and noise samples obey different distributions.
\end{enumerate}
Then the mean value of row $p$ on the affinity matrix $A^i$ follows a Gaussian distribution with mean $\mu_p$ and the mean value of row $q$ in the affinity matrix $A^i$ follows a Gaussian distribution with mean $\mu_q$, where $\mu_q < \mu_p$.  
\end{theorem}
Theorem \ref{1} states that a sample is classified as a noisy sample when its affinities with all other samples are small. Building on this theory, we sum the matrices $A^i$ by rows to obtain a series of column vectors $a^i \in \mathbb{R}$, where $i = \{1,2,...,k\}$. The values of $a^i$ represent the sum of affinities between clean and noise samples. According to the central limit theorem, the mean value of $a^i$ for both clean and noise samples follows a normal distribution. We omit the step of dividing by $n$ because a series of samples that corresponds to a normal distribution will also correspond to a normal distribution when multiplied by the same value simultaneously. Therefore, we believe that the samples belonging to a Gaussian distribution with a higher mean value are clean samples, while the others are noise samples. Based on this, we adopted the following steps to conduct sample screening: We represent the training sample feature set by $\mathbb{G}$, where for a feature set $\mathbb{G}_k=\{g_i\}_{i=1}^l$ with $l$ features and a label category of $k$, we calculate the cosine distribution:
\begin{equation}
    A^i_{j,z} = \mathrm{Cosine}\left( g_z, g_j\right)=\dfrac{ g_z\cdot g_j}{|| g_z||\times|| g_j||}, 
\end{equation}
where $g_z,g_j$ are two fearures in $\mathbb{G}_k$. We then represent $A^i$ as column vectors $A^i=[a_1\ a_2\ ...\ a_l]$. By summing these column vectors
\begin{equation}
    a^i = \sum_{index=1}^l a_{index},
\end{equation}
we obtain a value of $a^i$ that follows two different normal distributions. To differentiate between samples, we fit a two-component GMM using the Expectation-Maximization algorithm. For each sample, GMM can give a posterior probability that it belongs to two Gaussian distributions and the mean value of each normal distribution.
We can then select clean samples based on their posterior probability values being higher than a cutoff value of $d_{cutoff}$ in order to improve accuracy. The detailed the sample selection algorithm is presented in Algorithm~\ref{al1}.

\begin{algorithm}
\caption{Pairwise Similarity Distribution Clustering}
\label{al1}
  \SetAlgoLined
  \SetKwInOut{Input}{Input}\SetKwInOut{Output}{Output}
  \Input{training set $\mathbb{\tilde{D}}$, 
  features $\mathbb{G}=\{g_i\}_{i=1}^N$ of training set $\mathbb{\tilde{D}}=\{\mathcal{X},\mathcal{\tilde{Y}}\}=\{(x_i,\tilde{y}_i)\}_{i=1}^{N}$, , labels $\tilde{\mathcal{Y}}$ of training set $\mathbb{\tilde{D}}$, number of class k
  }
  \For{i=1 to N}
{put $(g_i,\tilde{y}_i)$ into $\mathbb{G}_{\tilde{y}_i}$ 
}
\For{i = 1 to k}{
    
    l = length($\mathbb{G}_k$)\\
    \For{j = 1 to l}{
        \For {z = 1 to l}{
        $A^i_{j,z} = cosine(g_z,g_j)$
        }
    }
    $a^i =$ sum $A^i$ by row\\
    $\{(p_1,\mu_1,\sigma_1),(p_2,\mu_2,\sigma_2)\} = GMM(a^i)$\\
    \For{j = 1 to l}{
    \If{$((p_{1,j}>p_{2,j})\wedge (\mu_{1,j}>\mu_{2,j}) \wedge (p_{1,j}>d_{cutoff}))\vee((p_{1,j}<p_{2,j})\wedge (\mu_{1,j}<\mu_{2,j})\wedge (p_{2,j}>d_{cutoff}))$}{put $(x_j,\tilde{y}_j)$ into $\mathbb{D}_{clean}^i$
    }
    \Else{
    put $(x_j,\tilde{y}_j)$ into $\mathbb{D}_{noisy}^i$
    }
    }
}
$\mathbb{D}_{clean}=\bigcup_{i=1}^{k} \mathbb{D}_{clean}^i$\\
$\mathbb{D}_{noisy}=\bigcup_{i=1}^{k} \mathbb{D}_{noisy}^i$\\
\Output{
$\mathbb{D}_{clean},\mathbb{D}_{noisy}$
}
\end{algorithm}

In practice, the features extracted by network usually tend to be inaccurate at the beginning, therefore we jointly take the uniform selection criterion~\cite{Karim_Rizve_Rahnavard:2022} to help data partition at early stages. In particular, the selecting results given by JSD are utilized when the number of joint samples chosen by PSDC and JSD is lower than the 0.8 times of that chosen by JSD. With the iteration going on, the representation capability of network will become stronger, therefore we simply use our PSDC to conduct sample selection.  As shown in Figure \ref{fig3}, we illustrate some sample selection examples on the Clothing1M, CIFAR-100, and CIFAR-10 datasets, in which the noise rates are set to be $3/7$, $2/7$, and $2/7$, respectively. Due to the powerful capability of our PSDC, the average purity of noisy set and clean set can reach $6/7$, which is a strong guarantee to any semi-supervised learning algorithm.

\begin{figure}[t]
	\begin{center}
		\includegraphics[width=1\linewidth]{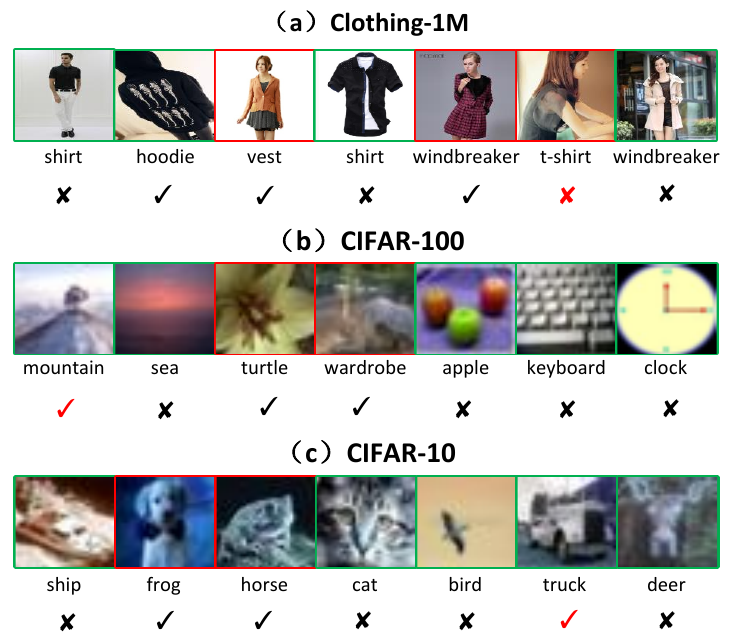}
	\end{center}
	\caption{Illustration of sample selection examples by our PSDC on the Clothing1M, CIFAR-100, and CIFAR-10 datasets. In particular, each image includes an assigned label at the bottom indicating whether it is clean or noisy. The clean labels are surrounded by green borders, while the noisy labels are bordered in red. The selection results are indicated by checkmark and cross, in which the noisy label is marked by cross and the clean label is marked by checkmark. Besides, the red checkmark or cross means that our PSDC makes wrong data partition to this sample. }
  \label{fig3}
\end{figure}

\begin{figure*}[t]
	\begin{center}
		\includegraphics[width=0.95\linewidth]{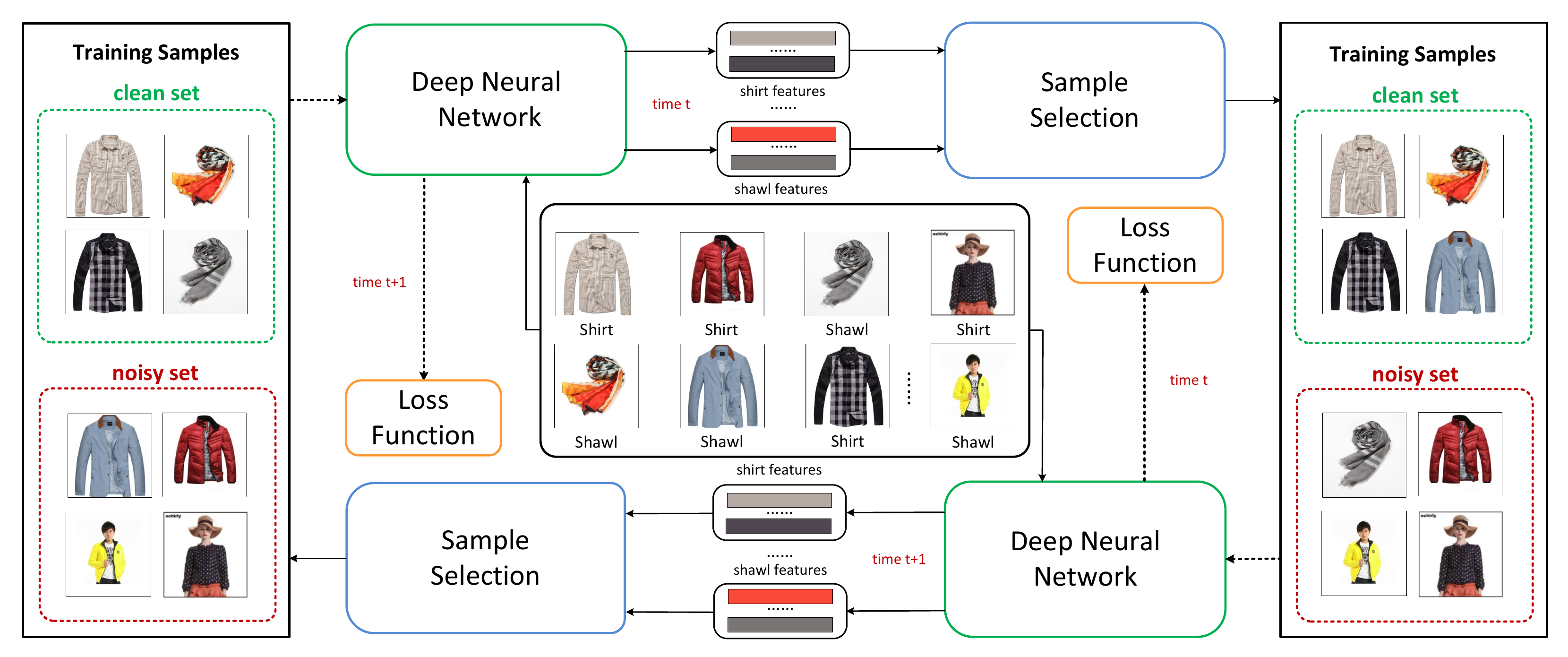}
	\end{center}
        \vspace{-0.2cm}
	\caption{Illustration of the semi-supervised training framewor. At each time $t$, the current network is first used to extract features for all training samples. Then, these features are taken to divide the training sample into clean set and noisy set using our PSDC algorithm. Finally, the sample selection results are further taken to power the semi-supervised learning regime. Once the current network is updated at time $t+1$, it is used to conduct sample selection in a new round. With more and more samples are correctly divided into clean set and noisy set, the network will also become powerful enough in the semi-supervised training manner. }
  \label{fig2}
\end{figure*}

\subsection{Theoretical analysis}
Various techniques have been employed in prior works to create clean and noisy subsets. Amongst them, the recent state-of-the-art method U{\footnotesize{NI}}C{\footnotesize{ON}}~\cite{Karim_Rizve_Rahnavard:2022} uses JSD to select samples. We conducted an analysis to compare the theoretical scope and applicability of this method with our own. 
Assuming class-conditional noise, the label corruption can be expressed by a noise transition matrix $T\in\mathbb{R}^{k\times k}$. Here, $T_{ij}=p(\tilde{y}=j|y=i)$ indicates the probability of flipping a class-i example into a class-j example. The noisy data distribution satisfies $p(\boldsymbol{x},\tilde{y}) = \sum_{i=1}^kp(\tilde{y}|y=i)p(\boldsymbol{x},y=i)$.\par

We assume the presence of a backbone that includes a classification layer, $h(f(\boldsymbol{x};\Theta)):\mathcal{X}\rightarrow\mathbb{R}^{N\times k}$ with output $g(\boldsymbol{x};\Theta)=[\hat{p}_1(\boldsymbol{x}),...,\hat{p}_k(\boldsymbol{x})]\in\mathbb{R}^k$, where $\boldsymbol{x}\in \mathcal{X}$ is a random sample. For simplicity, we denote $h(f(\boldsymbol{x};\Theta))$ as $h(\boldsymbol{x})$ and omit the $\Theta$ parameter. The softmax output of $h$ for $(\boldsymbol{x},y)$ is $\left[\hat{p}_{1}\left(\boldsymbol{x}\right), \ldots, \hat{p}_{k}\left(\boldsymbol{x}\right)\right]=\left[T_{h\left(\boldsymbol{x}\right) 1}, \ldots, T_{h\left(\boldsymbol{x}\right) k}\right]$. 
\begin{theorem}\label{2}
Consider two pairs of randomly selected samples, $(\boldsymbol{x}_1,\tilde{y})$ and $(\boldsymbol{x}_2,\tilde{y})$, with the same observed label from the set $\{\mathcal{X},\mathcal{\tilde{Y}}\}$. If the following conditions hold:
\begin{enumerate}
    \item $\mathcal{C}(\boldsymbol{x}_1)=\Tilde{y}$ and $\mathcal{C}(\boldsymbol{x}_2)\neq\Tilde{y}$;
    \item the noise transition matrix T of $\{\mathcal{X},\mathcal{\tilde{Y}}\}$ satisfies the diagonally-dominant condition $T_{ii}>max\{max_{j\neq i}T_{ij},max_{j\neq i}T_{ji}\}$,$\forall i$;
    \item noise type is uniform and noise rate is under 1, or noise type is pairwise and noise rate is under 0.5, or noise type is structured and noise rate is under 0.5. 
\end{enumerate}
are satisfied, then $JSD(h(\boldsymbol{x}_1),\Tilde{y})<JSD(h(\boldsymbol{x}_2),\Tilde{y})$, where JSD denotes Jensen-Shannon divergence. 
\end{theorem}
Theorem \ref{2} provides a definite order relationship for specific types of noise. However, for other types of noise, it is difficult to analyze, which raises concerns about the reliability of the method in realistic noisy environments. Additionally, because this method relies on labels, overfitting to noisy labels can cause the JSD for clean and noisy samples to become increasingly difficult to distinguish.

Our method theorem \ref{1} provides a submerged condition for effectiveness. This means that as long as the noise is not so severe that multiple samples differing from the clean category and resembling the clean category appear simultaneously in the same category of the dataset, or the number of a class of noisy samples exceeds that of the clean category, our method can effectively detect the noise. Additionally, Lyapunov condition ensures that the random variables in the random variable column are ``uniform'', ``equal'', with each random variable being ``insignificant''. This condition is an inference of the Lindeberg condition:
\begin{equation}
    \lim_{n\to\infty}\frac{1}{B_n^2}\sum_{i=1}^n E((X_i-\mu_i)^2I[|X_i-\mu_i|\ge\epsilon B_n])=0.
\end{equation}
In simple terms, the Lyapunov condition requires that the sum of variances for all random variables $B_n$, as shown in Eq.~\eqref{eq1},
\begin{equation}
\label{eq1}
     B_n = Var(a_{q_1}) + Var(a_{q_2}) + ... + Var(a_{q_{n}}),
\end{equation} 
be sufficiently large while the effect of any individual random variable, $a_i$, on the total variance is small. This ensures that no single random variable dominates the change in total variance. With the DNN training convergence, affinity matrix values, $A^i$, become stable, and as a result, most datasets satisfy the Lyapunov condition. Moreover, pairwise similarity distribution clustering is not a label-dependent method, its effectiveness solely depends on the accuracy of feature extraction. This reduces the impact of network overfitting on noise in sample selection, giving our method an advantage over loss-based methods. Therefore, our method has a wider range of theoretical applications and greater clarity.

In summary, grouping samples is a better strategy than selecting all samples directly because it fits the theoretical framework presented in Theorems \ref{1} and \ref{2}, in which a partial order relation is established for samples with identical ground truth labels but different clean labels. In our work, if the group mechanism is missing and similarity distribution is calculated directly among all samples, then similar samples will not dominate among all samples, particularly when the dataset is large and there are several non-similar sample pairs. Under these conditions, the similarity of all clean samples will be overshadowed by the similarity of noisy samples, and PSDC will be invalid. \par 
Proofs of theorem \ref{1} and theorem \ref{2} can be seen in the supplementary materials.

\subsection{Semi-supervised Training}
Once the training samples are divided into clean set and noisy set, any of the off-the-shelf semi-supervised learning methods can be further applied to train DNN. In particular, Figure~\ref{fig2} illustrates our training and sample updating process. At time $t$, the previously trained DNN is fed with the training set, and their features are extracted. Afterward, sample features are grouped by their ground truth labels, and PSDC selects each group of samples. The resulting clean set $\mathbb{D}_{clean}$, and noisy set $\mathbb{D}_{noisy}$, are subsequently used for semi-supervised training of the DNN at time $t+1$.

Inspired by DivideMix~\cite{Li_Richard_Steven:2020}, we train two networks simultaneously. Algorithm~\ref{al2} depicts the algorithm for Semi-Supervised Training. At each epoch, a network accepts the $\mathbb{D}_{clean}$ and $\mathbb{D}_{noisy}$ as labeled dataset and unlabeled dataset. For each mini-batch, MixMatch~\cite{berthelot2019mixmatch} with contrastive learning SimCLR~\cite{chen2020simple} is used for semi-supervised training. By training a model using self-extracted features and dividing the data, it might lead to confirmation bias~\cite{tarvainen2018mean}. Therefore, co-teaching~\cite{han2018coteaching} is implemented to prevent error accumulation. The features extracted by one network are used for sample selection in the other network. The two networks are kept distinct due to varying random parameter initialization, training data selection, and training shuffle. \par
During the training process, we create two types of augmented datasets - strongly augmented and weakly augmented - from both labeled and unlabeled datasets. The weakly augmented unlabeled dataset is utilized for guessing pseudo-labels, while the weakly augmented labeled dataset is used for label co-refinement. The strongly augmented unlabeled dataset is employed to compute the contrastive loss. Additionally, MixUp~\cite{zhang2018mixup} is performed on the ground-truth labeled samples and pseudo-labeled samples from the labeled and unlabeled datasets, respectively, to produce two augmented datasets, namely $\hat{\mathcal{X}}$ and $\hat{\mathcal{U}}$. \par
The semi-supervised losses are generated after the MixUp operation as follows: 
\begin{equation}
    \mathcal{L}_{\mathcal{X}}=\frac{1}{|\hat{\mathcal{X}}|}\sum_{\mathbf{x},\mathbf{p}\in\hat{\mathcal{X}}}\mathrm{H}(\mathbf{p},\mathbf{h}(\mathbf{f}(\mathbf{y}\mid\mathbf{x};\theta);\phi)),
\end{equation}
\begin{equation}
    \mathcal{L}_\mathcal{U}=\dfrac{1}{|\hat{\mathcal{U}}|}\sum_{\mathbf{u},\mathbf{q}\in\hat{\mathcal{U}}}\|\mathbf{q}-\mathbf{h}(\mathbf{f}(\mathbf{y}\mid\mathbf{u};\theta);\phi)\|_2^2,
\end{equation}
where $p$ refers to the co-refinement labels and $q$ are the pseudo labels, $H(\bullet,\bullet)$ is the cross-entropy, y are ground-truth labels. Moreover, we applied regularization to prevent the single-class assignment of all examples~\cite{tanaka2018joint}:
\begin{equation}
    \mathcal{L}_{R}=\sum_c\pi_c log\big(\frac{1}{\frac{1}{|\mathcal{X}+\mathcal{U}|}\sum_{\mathbf{x}\in|\mathcal{X}+\mathcal{U}|}\mathbf{h}(\mathbf{f}(\mathbf{x};\theta);\phi)}\big).
\end{equation}
We introduce a contrastive loss for the unlabeled dataset, using projected features $z_i$ and $z_j$ of the augmented samples from the unlabeled dataset $x_i$ and $x_j$. The contrastive loss function is expressed as: 
\begin{equation}
    \ell_{i,j}=-\log\frac{\exp(sim(\mathbf{z}_i,\mathbf{z}_j)/\kappa)}{\sum_{b=1}^{2B}\mathbf{1}_{b\neq i}\exp(sim(\mathbf{z}_i,\mathbf{z}_b)/\kappa)},
\end{equation}
\begin{equation}\label{8}
    {\mathcal L}_{\mathcal C}=\frac{1}{2B}\sum_{b=1}^{2B}[\ell_{2b-1,2b}+\ell_{2b,2b-1}],
\end{equation}
where $\mathbf{1}_{b\neq i}$ is an indicator function, $\kappa$ is a temperature constant and we set $\kappa=1$ in our work, $B$ is the number of samples in a mini-batch, $sim(z_i,z_j)$ is the cosine similarity between $z_i,z_j$.
Finally, we calculate $\mathcal{L}_{T}$, which is the total loss function to minimize. The total loss function we minimize is 
\begin{equation}
    \mathcal{L}_{T}=\mathcal{L}_{\mathcal{X}}+\lambda_{\mathcal{U}}\mathcal{L}_{\mathcal{U}}+\lambda_{R}\mathcal{L}_{R}+\lambda_\mathcal{C}\mathcal{L}_\mathcal{C},
\end{equation}
where $\lambda_{\mathcal{U}}, \lambda_R,\lambda_\mathcal{C}$ are loss coefficients. \par

\begin{algorithm}
\caption{Semi-supervised Training}
    \label{al2}
  \SetAlgoLined
  \SetKwInOut{Input}{input}\SetKwInOut{Output}{output}
  \Input{
  training set $\tilde{\mathbb{D}}=\{\mathcal{X},\mathcal{\tilde{Y}}\}$, 
  number of samples $N$, 
  number of classes $k$, 
  unsupervised loss coefficient $\lambda_\mathcal{U}$, contrastive loss coefficient $\lambda_\mathcal{C}$, regularization coefficient $\lambda_{r}$,
  network-1 $h_1$ parameters$\theta_1$ and network-2 $h_2$ parameters $\theta_2$ 
  }
  $\theta_1, \theta_2$ = Warmup($\mathcal{X},\mathcal{\tilde{Y}},\theta_1,\theta_2$)\\
  \While {epoch<maxepoch}
{\For {k=1 to 2} 
{$Features_k=g_{(k+1)mod 2}(\tilde{\mathbb{D}},\theta_{(k+1)mod 2})$\\
    $\mathcal{W}$ = $JSD(h(\mathcal{X}),\mathcal{\tilde{Y}})$\\
    $\mathbb{D}_{clean},\mathbb{D}_{noisy}$ = Sample Selection($Features_k$, $\mathcal{W}$)\\
    $\mathcal{X}_k$ = $\{\mathbb{D}_{clean},\mathcal{W}\}$\\
    $\mathcal{U}_k=\mathbb{D}_{noisy}$\\
    \For{iter = 1 to num batchs}{
    From $\{\mathbb{D}_{clean},\mathcal{W}\}$ draw a mini-batch ${(x_b,y_b,w_b);b\in(1,...,B)}$\\
    From $\mathbb{D}_{noisy}$ draw a mini-batch $\{u_b;b\in (1,...,B)\}$\\
    $\mathcal{L}_{\mathcal{X}},\mathcal{L}_{\mathcal{U}}=MixMatch(x_b,u_b)$\\
    Calculate $\mathcal{L}_{\mathcal{C}}$ using eq. \ref{8}\\$\mathcal{L}_{tot}=\mathcal{L}_{\mathcal{X}}+\lambda_{\mathcal{U}}\mathcal{L}_{\mathcal{U}}+\lambda_{r}\mathcal{L}_{reg}+\lambda_\mathcal{C}\mathcal{L}_\mathcal{C}$\\
    $\theta_k = SGD(\mathcal{L}_{total},\theta_k)$
    }
    }
}
\end{algorithm}
\section{Experiments}
\subsection{Datasets}
We evaluate our approach's effectiveness on three benchmark datasets: CIFAR-10, CIFAR-100~\cite{Krizhevsky2009LearningML}, and a real-world dataset, Clothing1M~\cite{Xiao2015LearningFM}, which are introduced as follows:

\textbf{CIFAR-10/100:} The CIFAR-10/100 datasets contain 50k training and 10k test images. respectively. We experiment with two types of noise models: symmetric and asmmetric. In particular, the symmetric noise is generated by randomly replacing the labels to all possible labels with $r$ portion of samples. What's different, the design of asymmetric label noise follows the structure of real mistakes that labels are only replaced by similar classes~(e.g. bird $\rightarrow$ airplane, deer $\rightarrow$ horse). 

\textbf{Clothing1M:} Clothing1M contains 1M clothing images in 14 classes. The dataset has noisy labels due to its origin from multiple online shopping websites, resulting in numerous mislabelled samples. For training, validating, and testing, the dataset has 50k, 14k, and 10k images, respectively.
\subsection{Training Details}
For backbones, the PreAct ResNet18~\cite{he2016identity} architecture is used for CIFAR-10 and CIFAR-100, while the ResNet50~\cite{he2015deep} network is used for Clothing1M. Besides,
the Stochastic Gradient Descent~(SGD) optimization is employed with initial learning rate 0.04, momentum 0.9, weight decay of $5e^{-4}$, and batch size 128 for training on CIFAR-10 and CIFAR-100. The network is trained for 350 epochs with a 10-epoch warm-up for CIFAR-10 and 30-epoch warm-up for CIFAR-100, linearly decaying the learning rate (lr-decay) by 0.1 per 120 epochs. In the case of Clothing1M, the network was trained for 150 epochs with a 2-epoch warm-up using a momentum of 0.9 and a weight decay of $1e^{-3}$. Initially, we set the learning rate to 0.002, which we subsequently reduced by a factor of 10 after 50 and 100 epochs. Moreover, the batch size remained fixed as 32. Auto-augment Policy~\cite{cubuk2019autoaugment} is utilized for data augmentation. In addition, CIFAR10-Policy is used for CIFAR-10 and CIFAR-100, and ImageNet-Policy is used for Clothing1M. Cut off threshold $d_{cutoff}$ for sample selection is set to $0.9$. Hyperparameters for semi-supervised learning $T$ is set to 0.5, $\lambda_{\mathcal{C}}, \lambda_{\mathcal{U}}, \lambda_{R}, \kappa$ are set to $0.025, 30, 1, 0.05$, the beta distribution parameter adopted by MixUp is set to $4$ for CIFAR-10/100 and $0.5$ for Clothing1M.
All the experiments are run on NVIDIA GeForce RTX 3090 GPUs.
\par

    
\subsection{Experimental Results}
To compare with the state-of-the-art approaches, the performance of our method is evaluated under various label noise scenarios. These include synthetic noisy label datasets such as CIFAR-10 and CIFAR-100, as well as real-world noisy datasets like Clothing1M. In particular, the symmetric noise rates of $20\%, 50\%, 80\%$ and asymmetric noise rates of $10\%, 30\%, 40\%$ are considered in our experiments.

Table~\ref{table1} depicts the average performance on the CIFAR-10 and CIFAR-100 datasets under the symmetric noise, in which our method achieves better results than the state-of-the-art approaches. Specifically, for the CIFAR-10 dataset, our PSDC shows superior results than the other methods at medium~($50\%$) and severe~($80\%$) noise levels. Similarly, for the CIFAR-100 dataset, our PSDC shows better performance at low~($20\%$), medium~($50\%$), and high~($80\%$) noise levels. PSDC's accuracy is slightly lower than that of DivideMix~\cite{Li_Richard_Steven:2020} and U{\footnotesize{NI}}C{\footnotesize{ON}}~\cite{Karim_Rizve_Rahnavard:2022} at a 20\% noise rate on the CIFAR-10 dataset. It is possible that this is due to the accumulation of errors caused by inaccurate early feature extraction.


    
\begin{table}[t]
\begin{tabular}{l|ccc|ccc}
\toprule
\multirow{2}{*}{\textbf{Method}} & \multicolumn{3}{c|}{\textbf{CIFAR-10}}        & \multicolumn{3}{c}{\textbf{CIFAR-100}}        \\
                                 & 20\%          & 50\%          & 80\%          & 20\%          & 50\%          & 80\%          \\ \hline
CE                               & 86.8          & 79.4          & 62.9          & 62.0          & 46.7          & 19.9          \\
LDMI~\cite{xu2019ldmi}                             & 88.3          & 81.2          & 43.7          & 58.8          & 51.8          & 27.9          \\
MixUp~\cite{zhang2018mixup}                            & 95.6          & 87.1          & 71.6          & 67.8          & 57.3          & 30.8          \\
Co-teaching+~\cite{yu2019does}                     & 89.5          & 85.7          & 67.4          & 65.6          & 51.8          & 27.9          \\
DivideMix~\cite{Li_Richard_Steven:2020}                        & 96.1 & 94.6          & 92.9          & 77.3          & 74.6          & 60.2          \\
U{\footnotesize{NI}}C{\footnotesize{ON}}~\cite{Karim_Rizve_Rahnavard:2022}                           & 96.0          & 95.6          & 93.9          & 78.9          & 77.6          & 63.9          \\ \hline
(ours)                       & \textbf{96.2}          & \textbf{95.7} & \textbf{94.0} & \textbf{79.4} & \textbf{77.7} & \textbf{64.3} \\ \bottomrule
\end{tabular}
\caption{Test accuracies($\%$) of different methods under symmetric noise on the CIFAR-10 and CIFAR-100 datasets.}
    \label{table1}
\end{table}

\begin{table}[t]
\vspace{-0.7cm}
\begin{tabular}{l|ccc|ccc}
\toprule
\multirow{2}{*}{\textbf{Method}} & \multicolumn{3}{c|}{\textbf{CIFAR-10}}        & \multicolumn{3}{c}{\textbf{CIFAR-100}}        \\
                                 & 10\%          & 30\%          & 40\%          & 10\%          & 30\%          & 40\%          \\ \hline
CE                               & 88.8          & 81.7          & 76.1          & 68.1          & 53.3          & 44.5          \\
LDMI~\cite{xu2019ldmi} & 91.1          & 91.2          & 84.0          & 68.1 &54.1 & 46.2          \\
MixUp~\cite{zhang2018mixup}                            & 93.3          & 83.3          & 77.7          & 72.4          & 57.6          & 48.1          \\
DivideMix~\cite{Li_Richard_Steven:2020}                      & 93.8          & 92.5          & 91.7          & 71.6          & 69.5          & 55.1          \\
MOIT~\cite{ortego2021multiobjective}                       & 94.2 & 94.1          & 93.2          & 77.4          & 75.1          & 74.0          \\
U{\footnotesize{NI}}C{\footnotesize{ON}}~\cite{Karim_Rizve_Rahnavard:2022}                           & 95.3          & 94.8          & 94.1          & 78.2          & 75.6          & 74.8          \\ \hline
(ours)                       & \textbf{95.6}          & \textbf{95.1} & \textbf{94.2} & \textbf{79.1} & \textbf{77.8}/\textbf{80.47} & \textbf{75.1} \\ \bottomrule
\end{tabular}
\caption{Test accuracies($\%$) of different methods under the asymmetric noise on the CIFAR-10 and CIFAR-100 datasets.}
    \label{table2}
\end{table}

\begin{table}[]
\begin{tabular}{llll}
\hline
\multicolumn{4}{c}{\textbf{Tiny-ImageNet}}                                                               \\ \hline
\multicolumn{1}{l|}{Noise(\%)}    & \multicolumn{1}{l}{0}    & \multicolumn{1}{l}{20}   & 50   \\ \hline
\multicolumn{1}{l|}{CE}           & \multicolumn{1}{l}{57.4} & \multicolumn{1}{l}{35.8} & 19.8 \\ \hline
\multicolumn{1}{l|}{Decoupling}   & \multicolumn{1}{l}{-}    & \multicolumn{1}{l}{37.0} & 22.8 \\ \hline
\multicolumn{1}{l|}{F-correction} & \multicolumn{1}{l}{-}    & \multicolumn{1}{l}{44.5} & 33.1 \\ \hline
\multicolumn{1}{l|}{MentorNet}    & \multicolumn{1}{l}{-}    & \multicolumn{1}{l}{45.7} & 35.8 \\ \hline
\multicolumn{1}{l|}{Co-teaching+} & \multicolumn{1}{l}{52.4} & \multicolumn{1}{l}{48.2} & 41.8 \\ \hline
\multicolumn{1}{l|}{M-correction} & \multicolumn{1}{l}{57.7} & \multicolumn{1}{l}{57.2} & 51.6 \\ \hline
\multicolumn{1}{l|}{NCT}          & \multicolumn{1}{l}{62.4} & \multicolumn{1}{l}{58.0} & 47.8 \\ \hline
\multicolumn{1}{l|}{UNICON}       & \multicolumn{1}{l}{62.7} & \multicolumn{1}{l}{59.2} & 52.7 \\ \hline
\multicolumn{1}{l|}{ours}         & \multicolumn{1}{l}{\textbf{63.1}}     & \multicolumn{1}{l}{\textbf{60.9}} & \textbf{53.5}      \\ \hline
\end{tabular}
\end{table}

Table~\ref{table2} presents the average performance on the CIFAR-10 and CIFAR-100 datasets under the asymmetric noise, in which our method achieves better results than all the state-of-the-art approaches. For the CIFAR-100 dataset at a 40\% rate of asymmetric noise rate, the performance of our method is consistent with that of UNICON~\cite{Karim_Rizve_Rahnavard:2022}. This is attributed to the fact that most methods face challenges in learning with high noise rates. In the case of our method, an increase in noise rate elevates the likelihood of clean samples being overwhelmed by noisy samples, resulting in a decrease in performance.

Table~\ref{table3} illustrates the average performance on the Clothing1M dataset. From the results, we can find that our PSDC yields better results than most of the baseline methods, albeit slightly inferior to U{\footnotesize{NI}}C{\footnotesize{ON}}\cite{Karim_Rizve_Rahnavard:2022}. This is likely due to U{\footnotesize{NI}}C{\footnotesize{ON}}'s adoption of a category balance strategy, which enhances the performance of test results. 

\subsection{Ablation Study}
We further study the effect of removing different components, which can offer a better understanding of the factors that contribute to the success of our approach. Without loss of generality, we evaluate our method on the CIFAR-100 dataset for convenience.

\textbf{Effectiveness of Pairwise Similarity Distribution:} To evaluate the performance of pairwise similarity distribution in sample selection, we compare it with the other two sample selection methods under 50\% and 80\% symmetric noise rates. The test accuracies are shown in Table~\ref{table4}, in which:~(1) ``GMM'' means that clustering the samples directly using the extracted features by backbone network; (2) ``GMM+CE'' means that combining the cross-entropy loss with GMM to cluster samples, as done by DivideMix~\cite{Li_Richard_Steven:2020}; and~(3) ``GMM+PSDC'' means that taking the pairwise similarity distribution to represent sample structure and the GMM for dividing the samples into clean set and noisy set. From the results, we can find that ``GMM+PSDC'' can achieve the best results in noisy label learning, which indicates that the combination of GMM and pairwise similarity distribution is a robust sample selection method under different noise rates.

\begin{table}[t]
  \begin{center}
    
    \begin{tabular}{l|c|c} 
    \toprule
      \textbf{Method} & \textbf{Backbone} & \textbf{Test Accuracy}\\
      \hline
      CE & ResNet-50 &69.21\\
      Joint-Optim~\cite{tanaka2018joint}  & ResNet-50 &72.00\\
      MetaCleaner~\cite{Zhang_2019_CVPR} & ResNet-50 &72.50\\
      PCIL~\cite{yi2019probabilistic} & ResNet-50 &73.49\\
      DivideMix~\cite{Li_Richard_Steven:2020} & ResNet-50 &74.76\\
      ELR~\cite{liu2020earlylearning}& ResNet-50 &74.81\\
    U{\footnotesize{NI}}C{\footnotesize{ON}}~\cite{Karim_Rizve_Rahnavard:2022}& ResNet-50 &74.98\\
    CC& ResNet-50 & 75.4\\
      \hline
      ~(ours) &ResNet-50 &\textbf{75.55}\\
    \bottomrule
    \end{tabular}
    \caption{Test accuracies($\%$) of different methods on the Clothing1M dataset.}
    \label{table3}
  \end{center}
  \vspace{-0.8cm}
\end{table}

\begin{table}[t]
  \begin{center}
    
    \begin{tabular}{l|c|c} 
    \toprule
      \textbf{Method} & \textbf{Backbone} & \textbf{Test Accuracy}\\
      \hline
      CE & Vgg19-BN &79.4\\
      	
Nested Dropout & Vgg19-BN &81.3\\
     SELFIE & Vgg19-BN &81.8\\
      	
Nested+Co-teaching
(NCT)& Vgg19-BN &84.1\\
      InstanceGM with ConvNeXt & ConvNeXt	&84.7\\
      Dynamic Loss& Vgg19-BN &86.5\\
   BtR& Vgg19-BN &\textbf{88.5}\\
   
      \hline
      ~(ours) & Vgg19-BN &87.8\\
    \bottomrule
    \end{tabular}
    \caption{Test accuracies($\%$) of different methods on the Clothing1M dataset.}
    \label{table3}
  \end{center}
  \vspace{-0.8cm}
\end{table}

\begin{table}[t]
  \begin{center}
    \begin{tabular}{l|c} 
    \toprule
       \textbf{Method} & \textbf{Test accuracy} \\
    \hline
    \hline
    Dataset & CIFAR-100 \\
    \hline
    Symmetric Noise Rate & $50\%$\quad$80\%$ \\
      \hline
      GMM + PSDC & \textbf{77.7} \quad \textbf{64.3} \\
      GMM + CE & 74.6 \quad 60.2\\
      GMM &74.5 \quad 42.1\\
    \bottomrule
    \end{tabular}
    \caption{Effectiveness of pairwise similarity distribution, in which we take the GMM as sample selection metric and then evaluate  the test accuracy on the CIFAR-100 dataset under different symmetric noise rates.}
    \label{table4}
  \end{center}
  \vspace{-0.8cm}
\end{table}

\begin{table}[t]
  \begin{center}
    \begin{tabular}{l|c} 
    \toprule
       \textbf{Method} & \textbf{Test accuracy} \\
    \hline
    \hline
    Dataset & CIFAR-100 \\
    \hline
    Symmetric Noise Rate & $50\%$\quad$80\%$ \\
      \hline
      PSDC+GMM & \textbf{77.7}\quad \textbf{64.3}\\
      PSDC+K-means& 69.2\quad 36.1\\
    \bottomrule
    \end{tabular}
    \caption{Effectiveness of Gaussian Mixture Model, in which we take the GMM and K-means to cluster samples and then evaluate the test accuracy on the CIFAR-100 dataset under different symmetric noise rates.}
    \label{table6}
  \end{center}
\end{table}

To support our viewpoint, we present the accuracy of selected samples in the clean set under 50\% symmetric noise rate, as shown in Figure~\ref{fig4}, in which the ``Begin'' and ``Highest'' denotes the model obtained after the warmup training and the highest accuracy respectively in the training process. From the results, we find that the lowest accuracy is achieved by simply using the GMM to cluster samples, because it is hard to deal with the high dimensional features without considering the noisy label or sample structure. Besides, the highest accuracy is achieved by jointly using the GMM and pairwise similarity distribution to cluster samples, which indicates that the sample pairwise sample structure is more robust than the noisy label prior information in sample selection at moderate noise rates. Because the pairwise similarity distribution solely relies on the extracted features, and is not directly impacted by noisy labels during sample selection.

\begin{figure}[t]
	\begin{center}
		\includegraphics[width=0.95\linewidth]{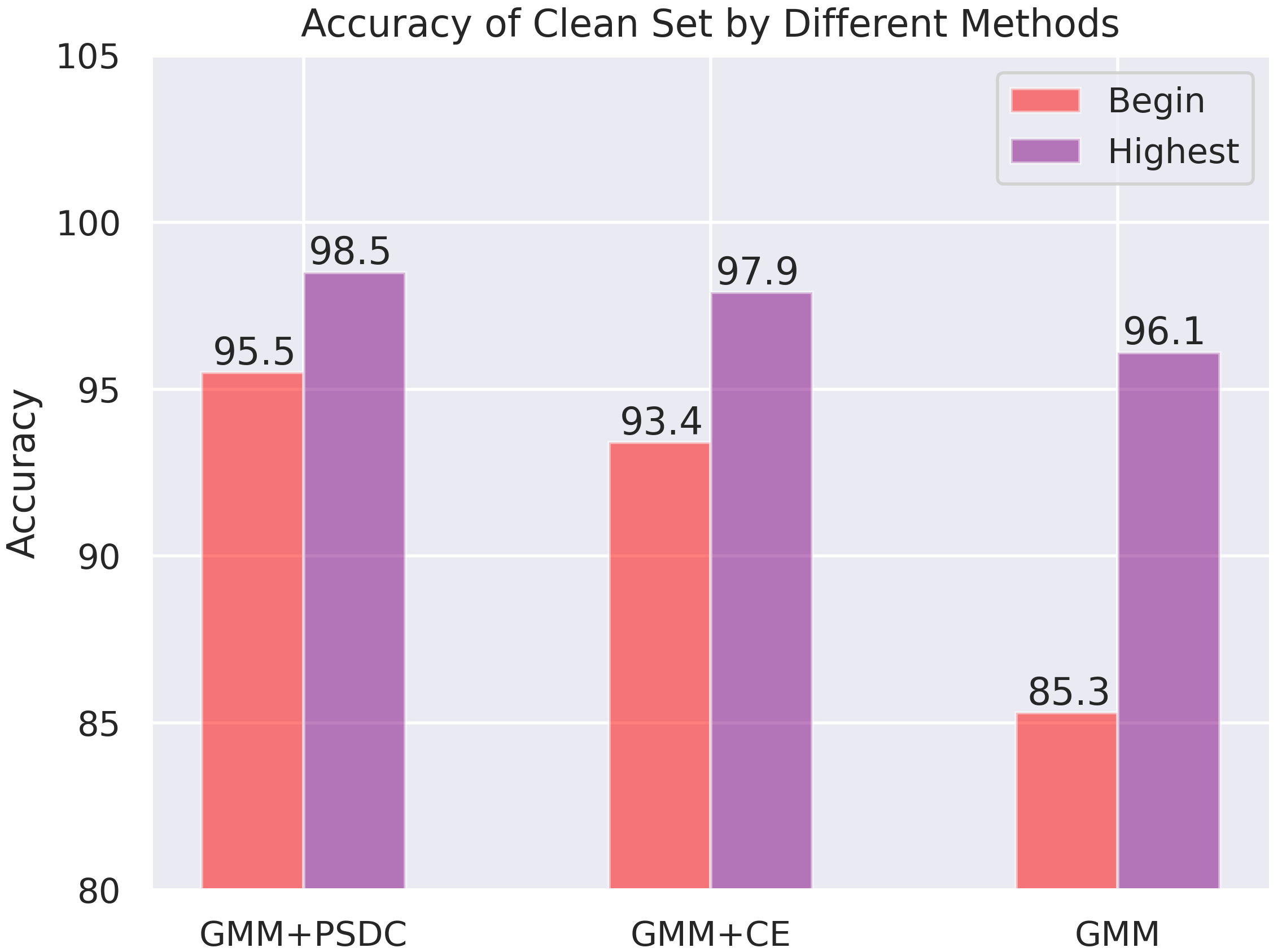}
	\end{center}
	\caption{Accuracy of clean sets using different methods with 50\% symmetric noise added, where samples are clustered using GMM based on extracted features, cross-entropy loss, and pairwise similarity measures, respectively on the CIFAR-100 dataset. }
  \label{fig4}
\end{figure}


\textbf{Effectiveness of Gaussian Mixture Model:} To explain the effectiveness of GMM in sample clustering, we compare it with the widely-used K-means clustering method under 50\% and 80\% symmetric noise rates. It should be noticed that the K-means needs few samples with clean labels to conduct sample selection, while the GMM doesn't need any sample with clean label in the sample selection process. In practice, we give 3 samples with clean labels for K-means to divide the training samples into clean set and noisy set. The test accuracies are shown in Table~\ref{table6}, in which the ``PSDC+GMM'' achieves better results than ``PSDC+K-means'' in both cases. The results indicate that even though no samples with clean labels are used in our GMM, it is superior than the K-means in sample selection. The underlying reason is that the GMM can give the posterior probability to each cluster, therefore the purity of clean set can be significantly improved by setting a threshold in the training process.

\section{Conclusion}
This paper suggests a novel pairwise similarity distribution clustering method for network training with noisy labels. It divides the training samples into one clean set and another noisy set, so as to power any of
the off-the-shelf semi-supervised learning methods to train networks. Unlike the previous methods which take the noisy labels as prior information, we utilize the pairwise similarity distribution as sample structure to increase its adaptability to severe label noise. Our findings demonstrate significant improvements over prior research in various datasets. As future work, we plan to investigate how the pairwise sample structure and noisy label prior can be utilized to complement each other in the context of noisy label learning.
\bibliographystyle{ACM-Reference-Format}
\bibliography{software}

\end{document}



\maketitle

\section{Proof}
\begin{theorem}
Consider two pairs of samples, ${(x_p,\tilde{y}),(x_q,\tilde{y})}$, randomly selected from $\mathbb{G}_i$
, with their respective indices in the affinity matrix $A^i$ being $p$ and $q$. Given the following conditions:
\begin{enumerate}
\item $\mathcal{C}(x_p) \neq \tilde{y}$ and $\mathcal{C}(x_q) = \tilde{y}$;
\item clean samples are not submerged by noise samples;
\item clean samples and noise samples obey different distributions.
\end{enumerate}
Then the mean value of row $p$ on the affinity matrix $A^i$ follows a Gaussian distribution with mean $\mu_p$ and the mean value of row $q$ in the affinity matrix $A^i$ follows a Gaussian distribution with mean $\mu_q$, where $\mu_q < \mu_p$.  
\end{theorem}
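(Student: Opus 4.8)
The plan is to reduce the statement to two essentially independent claims — (i) each row mean is (asymptotically) Gaussian, and (ii) the clean row mean is strictly smaller in expectation than the noise row mean — and to treat the three hypotheses as supplying exactly the ingredient each claim needs. Write $N=|\mathbb{G}_i|$ and let $\bar{A}^i_p=\frac{1}{N}\sum_{k=1}^{N}A^i_{pk}$ be the mean of row $p$. I take $A^i_{pk}$ to be a dissimilarity-type affinity (larger entry $=$ greater feature-space discrepancy between the two indexed samples), which is what makes the asserted ordering $\mu_q<\mu_p$ meaningful: the row mean then measures the average dissimilarity of a sample to the whole group.

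For claim (i), I would fix the row index and regard the entries $\{A^i_{pk}\}_k$ as random variables induced by the randomness of the group members $x_k$. Hypothesis~2 guarantees the group is non-degenerate — there are $\Theta(N)$ genuinely clean members — so the sum defining $\bar{A}^i_p$ ranges over a growing number of terms of bounded variance. A central limit theorem in Lindeberg/Lyapunov form (to accommodate the mildly non-identical summands) then gives that $\bar{A}^i_p$ is asymptotically normal, and the identical argument for row $q$ yields the second Gaussian. If the feature representation is itself modelled as Gaussian (as hypothesis~3 invites), each affinity entry is a smooth transform of jointly Gaussian vectors and the normality can be asserted exactly rather than asymptotically.

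For claim (ii), partition $\mathbb{G}_i$ into its clean part $\mathcal{S}_c$ (true class $\tilde{y}$, size $n_c$) and its noise part $\mathcal{S}_n$ (true class $\neq\tilde{y}$, size $n_n=N-n_c$). Writing $d_{cc},d_{cn},d_{nn}$ for the expected within-clean, clean--noise, and within-noise affinities and using the symmetry $d_{cn}=d_{nc}$, the two row means decompose as
\begin{align}
\mu_q = \tfrac{n_c}{N}\,d_{cc}+\tfrac{n_n}{N}\,d_{cn}, \qquad \mu_p = \tfrac{n_c}{N}\,d_{cn}+\tfrac{n_n}{N}\,d_{nn},
\end{align}
whence
\begin{align}
\mu_p-\mu_q = \tfrac{n_c}{N}\,(d_{cn}-d_{cc}) + \tfrac{n_n}{N}\,(d_{nn}-d_{cn}).
\end{align}
Hypothesis~3 (distinct distributions) is what forces $d_{cn}>d_{cc}$: a sample drawn from the clean class is, in expectation, closer to other clean samples than a sample drawn from a different distribution, so the first bracket is strictly positive. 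The second bracket has no fixed sign, but it carries the weight $n_n/N$, which hypothesis~2 (``clean not submerged'') keeps small relative to $n_c/N$; hence the first term dominates and $\mu_p-\mu_q>0$.

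I expect the main obstacle to be claim (ii), specifically turning the qualitative hypotheses into the two quantitative facts the argument rests on: deriving $d_{cn}-d_{cc}>0$ rigorously from ``different distributions'' (which in general requires an extra concentration assumption, e.g.\ that the clean class is tighter, so that a between-distribution expected distance provably exceeds a within-class one), and bounding the possibly-negative second bracket so that the dominance $n_c\gg n_n$ demonstrably overrides it. The Gaussian claim (i) is comparatively routine, its only subtlety being the weak dependence among entries sharing a common row index $p$, which can be absorbed into a CLT for dependent/triangular arrays once the entrywise variances are controlled.
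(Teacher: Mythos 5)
Your first half (asymptotic normality of the row means) is essentially the paper's own argument: the paper normalizes the two row sums and invokes the Lindeberg--Levy CLT for one row and the Lyapunov CLT (under an assumed Lyapunov condition) for the other, exactly the tools you name; your remark about weak dependence among entries sharing a row index is a refinement the paper does not even attempt. The divergence --- and the genuine gap --- is in the ordering $\mu_q<\mu_p$. The paper does not derive this from any geometric decomposition: it reads condition 2 as \emph{directly asserting} the inequality, i.e.\ ``clean samples are not submerged by noise samples'' is taken to mean $S_1>S_2$ for the two row sums, after which $\mu_p>\mu_q$ follows by passing to expectations. Your route instead tries to prove the ordering from the structure of the group, and, as you yourself concede, it does not close under the stated hypotheses: (a) ``different distributions'' yields no quantitative lower bound on $d_{cn}-d_{cc}$ --- two distinct distributions can perfectly well have cross-expectation smaller than within-expectation unless you add a concentration/tightness assumption on the clean class; (b) ``not submerged'' gives at best $n_n\le n_c$, i.e.\ $n_n/N\le 1/2$, not $n_n/N\to 0$, so your dominance step needs a bound of the form $\frac{n_c}{n_n}\,(d_{cn}-d_{cc})>d_{cn}-d_{nn}$, which is essentially the inequality being proved; and (c) the noise part of $\mathbb{G}_i$ need not be identically distributed (noise samples can come from several distinct true classes), so a single well-defined $d_{nn}$ does not even exist, and the unsigned bracket $d_{nn}-d_{cn}$ cannot be controlled by anything in the hypotheses. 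As written, your claim (ii) is therefore a reduction to unstated assumptions, not a proof.

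The comparison is instructive about what each approach buys. The paper's reading makes the second half of the theorem nearly tautological (condition 2 \emph{is} the conclusion, up to taking expectations), which is why its proof is three lines; the cost is that the theorem then has little independent content. Your decomposition $\mu_p-\mu_q=\frac{n_c}{N}(d_{cn}-d_{cc})+\frac{n_n}{N}(d_{nn}-d_{cn})$ is the right way to give the statement real content, but to be rigorous you must either (1) strengthen hypothesis 3 to a quantitative separation (expected cross-affinity exceeds expected within-clean affinity by a fixed margin $\delta>0$) together with a strengthened hypothesis 2 bounding $n_n/n_c$ in terms of $\delta$ and the range of the affinity values, or (2) adopt the paper's definitional reading of condition 2, in which case your entire claim (ii) machinery becomes unnecessary. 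Without one of these two moves, the final inequality $\mu_q<\mu_p$ remains unproven in your proposal.
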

\begin{proof}
 Consider the sum:
 \begin{equation}
     S_1 = a_{p_1} + a_{p_2} + ... + a_{p_n},
 \end{equation}
  \begin{equation}
     S_2 = a_{q_1} + a_{q_2} + ... + a_{q_{n}}.
 \end{equation}
 We normalize it to obtain a random variable with zero mean and unit variance as follows:
 \begin{equation}
     Z_p = \frac{S_1-E(S_1)}{\sqrt{Var(S_1)}} = \frac{1}{\sigma_p}\sum_{i=p_1}^{p_n}(a_{i}-\mu_p).
 \end{equation}
 Then, according to the Lindeberg-Levy central limit theorem, as $N\rightarrow \infty, Z_p\rightarrow N(0,1)$ in distribution. 
 For the noise sample, their average affinity are a sequence of independent random variables but not necessarily uniformly distributed, we assume that it satisfies the Liapunov Condition:
 $a_q = \{a_{q_1},...,a_{q_{n}}\}$ is an independent sequence of random variables satisfying $E(a_q)=\mu_q, Var(a_q) = \sigma_q<\infty$.For some $\delta>0$
 \begin{equation}
     \frac{\sum_{k=1}^{n}E|X_{k}-\mu_q|^{2+\delta}}{\sigma_q^{2+\delta}}\rightarrow0.
 \end{equation}
Hence according to the Lyapunov central limit theorem:
 \begin{equation}
     Z_q = \frac{1}{\sigma_q}\sum_{i=q_1}^{q_{n}}(a_{i}-\mu_q).
 \end{equation}
 $Z_q\rightarrow N(0,1)$ as $N\rightarrow \infty$.
 Because clean samples are not submerged by noise samples, we know
 $S_1>S_2$ and we have proofed $S_1,S_2$ obey a gaussian distribution respectively with mean $\mu_p$ and $\mu_q$, so we get:
 \begin{equation}
     \mu_p>\mu_q
 \end{equation}
\end{proof}
\begin{theorem}
Consider two pairs of randomly selected samples, $(\boldsymbol{x}_1,\tilde{y})$ and $(\boldsymbol{x}_2,\tilde{y})$, with the same observed label from the set $\{\mathcal{X},\mathcal{\tilde{Y}}\}$. If the following conditions hold:
\begin{enumerate}
    \item $\mathcal{C}(\boldsymbol{x}_1)=\Tilde{y}$ and $\mathcal{C}(\boldsymbol{x}_2)\neq\Tilde{y}$;
    \item the noise transition matrix T of $\{\mathcal{X},\mathcal{\tilde{Y}}\}$ satisfies the diagonally-dominant condition $T_{ii}>max\{max_{j\neq i}T_{ij},max_{j\neq i}T_{ji}\}$,$\forall i$;
    \item noise type is uniform and noise rate is under 1, or noise type is pairwise and noise rate is under 0.5, or noise type is structured and noise rate is under 0.5. 
\end{enumerate}
are satisfied, then $JSD(h(\boldsymbol{x}_1),\Tilde{y})<JSD(h(\boldsymbol{x}_2),\Tilde{y})$, where JSD denotes Jensen-Shannon divergence. 
\end{theorem}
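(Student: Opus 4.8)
The plan is to reduce the divergence comparison to a comparison of a single scalar --- the probability mass that the classifier places on the observed label $\tilde{y}$ --- and then to show that a clean sample necessarily places more mass there than a noisy one. I would first treat $\tilde{y}$ as the one-hot vector $e_{\tilde{y}}$ and exploit a convenient simplification: when one argument of the Jensen--Shannon divergence is one-hot, $JSD(h(\boldsymbol{x}),\tilde{y})$ depends on $h(\boldsymbol{x})$ only through its $\tilde{y}$-th coordinate $p := h(\boldsymbol{x})_{\tilde{y}}$, because the off-diagonal contribution collapses to $(1-p)\log 2$ regardless of how the remaining mass is distributed among the other classes. Writing $M = \tfrac12(h(\boldsymbol{x}) + e_{\tilde{y}})$ and expanding $JSD = \tfrac12 KL(h(\boldsymbol{x})\|M) + \tfrac12 KL(e_{\tilde{y}}\|M)$ yields the closed form
\begin{equation}
  f(p) = \frac{1}{2}\Big[\,p\log\frac{2p}{p+1} + (1-p)\log 2\,\Big] + \frac{1}{2}\log\frac{2}{p+1}.
\end{equation}

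The next step is a monotonicity lemma: differentiating $f$ gives
\begin{equation}
  f'(p) = \frac{1}{2}\log\frac{p}{p+1} < 0 \qquad \text{for all } p\in(0,1),
\end{equation}
so $f$ is strictly decreasing. Consequently the whole theorem reduces to the scalar inequality $h(\boldsymbol{x}_1)_{\tilde{y}} > h(\boldsymbol{x}_2)_{\tilde{y}}$, after which $JSD(h(\boldsymbol{x}_1),\tilde{y}) = f(h(\boldsymbol{x}_1)_{\tilde{y}}) < f(h(\boldsymbol{x}_2)_{\tilde{y}}) = JSD(h(\boldsymbol{x}_2),\tilde{y})$ follows at once.

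To obtain that scalar inequality I would model the classifier output through the noise transition matrix. Under the standard assumption that $h$ approximates the noisy class posterior, $h(\boldsymbol{x})_j = \sum_k T_{kj}\,\eta_k(\boldsymbol{x})$, where $\eta_k(\boldsymbol{x}) = P(\mathcal{C}=k\mid \boldsymbol{x})$ is the clean posterior. For a confident sample whose clean posterior concentrates on its true class, this gives $h(\boldsymbol{x}_1)_{\tilde{y}} \approx T_{ii}$ for the clean sample (true class $i = \tilde{y}$ by condition 1) and $h(\boldsymbol{x}_2)_{\tilde{y}} \approx T_{ji}$ for the noisy sample (true class $j = \mathcal{C}(\boldsymbol{x}_2)\neq \tilde{y}$). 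The diagonally-dominant condition 2, specifically the part $T_{ii} > \max_{j\neq i} T_{ji}$, then yields $T_{ii} > T_{ji}$ and hence the desired $h(\boldsymbol{x}_1)_{\tilde{y}} > h(\boldsymbol{x}_2)_{\tilde{y}}$. Condition 3 enters here to certify that diagonal dominance actually holds in each regime --- e.g.\ for pairwise or structured noise $T_{ii} = 1-\rho > \rho$ is equivalent to $\rho < 0.5$ --- so that condition 2 is non-vacuous.

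The main obstacle is precisely this last output model: the $JSD$ monotonicity is exact, but the relation $h(\boldsymbol{x})_{\tilde{y}} \approx T_{\mathcal{C}(\boldsymbol{x}),\tilde{y}}$ is only leading order and relies on the clean posterior being sharply concentrated. An honest argument must control the residual mixture terms $\sum_{k\neq \mathcal{C}(\boldsymbol{x})} T_{k\tilde{y}}\,\eta_k(\boldsymbol{x})$ and show that the margin $T_{ii} - \max_{j\neq i}T_{ji}$ guaranteed by condition 2 dominates them; the noise-rate bounds of condition 3 are exactly what cap the off-diagonal mass and make this possible. I expect this mixture-control estimate, rather than the $JSD$ algebra, to be where the real work and any implicit ``clean-samples-not-submerged'' assumption reside.
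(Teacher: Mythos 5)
Your proposal is correct, and it takes a genuinely cleaner route than the paper's own proof. The paper performs the same one-hot JSD expansion you do, but then splits the comparison into two separate inequalities proved independently: a comparison of off-diagonal sums $\sum_{i\neq\tilde{y}}T_{h(\boldsymbol{x}_1),i}<\sum_{i\neq\tilde{y}}T_{h(\boldsymbol{x}_2),i}$, established by brute-force case analysis of the uniform, pairwise and structured transition matrices (this is where it spends condition 3), and a comparison of the diagonal terms via monotonicity of $g(t)=t\log\frac{2t}{1+t}-\log(1+t)$, shown decreasing through a second-derivative argument (this is where it spends condition 2). You instead use the row-normalization $\sum_{i\neq\tilde{y}}h(\boldsymbol{x})_i=1-p$ with $p=h(\boldsymbol{x})_{\tilde{y}}$ to collapse the entire divergence into one scalar function $f(p)$, verify $f'(p)=\frac{1}{2}\log\frac{p}{p+1}<0$ directly (your derivative computation is correct, and simpler than the paper's convexity detour), and reduce the theorem to the single inequality $T_{\tilde{y}\tilde{y}}>T_{j\tilde{y}}$, which is literally the column half of condition 2. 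This unification exposes a redundancy in the paper's argument: since rows of $T$ sum to one, both of its inequalities are equivalent to that same diagonal-dominance statement, so the case-by-case matrix computations do no independent work --- their only genuine role is to certify that diagonal dominance holds for those noise models at those rates, which is exactly the role you assign to condition 3. What the paper's version buys is concreteness about the admissible noise regimes; what yours buys is economy, and also honesty about the one real gap both arguments share: the identification of the classifier output with a row of the transition matrix, $h(\boldsymbol{x})_j=T_{\mathcal{C}(\boldsymbol{x}),j}$, which the paper makes silently when it rewrites $\hat{p}_i(\boldsymbol{x})$ as $T_{h(\boldsymbol{x}),i}$ in the second line of its derivation, and which you correctly isolate as the step requiring a posterior-concentration or mixture-control assumption that neither you nor the paper actually proves.
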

\begin{proof}
Consider the definition of JSD, for an example $(\boldsymbol{x},y)$

\begin{equation}
\begin{aligned}
& JSD(h(\boldsymbol{x}), y) \\
& =\frac{1}{2} KLD\left(h(\boldsymbol{x}) \| \frac{h(\boldsymbol{x})+y}{2}\right)+\frac{1}{2}KLD\left(y \| \frac{h(\boldsymbol{x})+y}{2}\right) \\
& =\frac{1}{2} \sum_{i \neq y}^{c-1} \hat{p}_{i}(\boldsymbol{x}) \log \frac{2 \hat{p}_{i}(\boldsymbol{x})}{\hat{p}_{i}(\boldsymbol{x})}+\frac{1}{2} p_{y}(\boldsymbol{x}) \log \frac{2 p_{y}(\boldsymbol{x})}{1+p_{y}(\boldsymbol{x})}+\frac{1}{2} \log \frac{2}{1+p_{y}(\boldsymbol{x})} \\
& =\frac{1}{2} \sum_{i \neq y}^{c-1} T_{h(\boldsymbol{x}), i} \log 2+\frac{1}{2} T_{h(\boldsymbol{x}), y} \log \frac{2 T_{h(\boldsymbol{x}), y}}{1+T_{h(\boldsymbol{x}), y}}+\frac{1}{2} \log \frac{2}{1+T_{h(\boldsymbol{x}), y}} \\
& =\frac{1}{2}\left(\log 2 (1+\sum_{i \neq y}^{c-1} T_{h(\boldsymbol{x}), i})+T_{h(\boldsymbol{x}), y} \log \frac{2 T_{h(\boldsymbol{x}), y}}{1+T_{h(\boldsymbol{x}), y}}-\log \left(1+T_{h(\boldsymbol{x}), y}\right)\right)
\end{aligned} 
\end{equation}


In order to proof $J S D\left(h\left(\boldsymbol{x}_{2}\right), \Tilde{y}\right)>J S D\left(h\left(\boldsymbol{x}_{1}\right), \Tilde{y}\right)$, we need to proof
\begin{equation}\label{3}
     \sum_{i \neq \Tilde{y}}^{c-1} T_{h\left(\boldsymbol{x}_{1}\right), i}< \sum_{i \neq \Tilde{y}}^{c-1} T_{h\left(\boldsymbol{x}_{2}\right), i}
\end{equation}
and
\begin{equation}
\begin{array}{l}\label{4}
     T_{h\left(\boldsymbol{x}_{1}\right), \Tilde{y}} \log \frac{2 T_{h\left(\boldsymbol{x}_{1}\right), \Tilde{y}}}{1+T_{h\left(\boldsymbol{x}_{1}\right), \Tilde{y}}}-\log \left(1+T_{h\left(\boldsymbol{x}_{1}\right), \Tilde{y}}\right) \\
    <T_{h\left(\boldsymbol{x}_{2}\right), \Tilde{y}} \log \frac{2 T_{h\left(\boldsymbol{x}_{2}\right), \Tilde{y}}}{1+T_{h\left(\boldsymbol{x}_{2}\right), \Tilde{y}}}-\log \left(1+T_{h\left(\boldsymbol{x}_{2}\right), \Tilde{y}}\right)
\end{array}
\end{equation}
For inequality (\ref{3}), We analyze it on different noise types. Let r stand for noise rate and the class number is c. \\
On uniform noise, the noise transition matrix T is:
\begin{equation*}
	\begin{bmatrix} 
	   1-\frac{(c-1)r}{c} & \frac{r}{c} &\cdots& \frac{r}{c} \\
	\frac{r}{c}& 1-\frac{(c-1)r}{c}&\cdots & \frac{r}{c}\\
	\vdots&\vdots& \ddots&\vdots \\
	\frac{r}{c}&\frac{r}{c}&\cdots& 1-\frac{(c-1)r}{c}
	\end{bmatrix}
\end{equation*}

So $\sum_{i \neq \Tilde{y}}^{c-1} T_{h\left(\boldsymbol{x}_{1}\right), i}=\frac{(c-1)r}{c}$, and $\sum_{i \neq \Tilde{y}}^{c-1} T_{h\left(\boldsymbol{x}_{2}\right), i}=1-\frac{r}{c}$. Now we get:
\begin{equation}\label{5}
    \sum_{i \neq y}^{c-1} T_{h\left(\boldsymbol{x}_{2}\right), i}-\sum_{i \neq y}^{c-1} T_{h\left(\boldsymbol{x}_{1}\right), i}=1-r
\end{equation}
From equation (\ref{5}) we know that if the noise type is uniform noise and the noise rate $r < 1$, inequality (\ref{3}) is true.\\
On pairwise noise, the noise transition matrix T is:
\begin{equation*}
	\begin{bmatrix} 
	   1-r & r & 0&\dots & 0  \\
	0 & 1-r&r &\dots & 0\\
	\vdots&\vdots& \ddots&\ddots&\vdots \\
	r& 0&\dots& 0 & 1-r
	\end{bmatrix}
\end{equation*}
In this case, $\sum_{i \neq \Tilde{y}}^{c-1} T_{h\left(\boldsymbol{x}_{1}\right), i}= r $, $\sum_{i \neq \Tilde{y}}^{c-1} T_{h\left(\boldsymbol{x}_{2}\right), i}=1-r \text{ or } 1$. We get:
\begin{equation}\label{6}
    \sum_{i \neq y}^{c-1} T_{h\left(\boldsymbol{x}_{2}\right), i}-\sum_{i \neq y}^{c-1} T_{h\left(\boldsymbol{x}_{1}\right), i}=1-r \text { or } 1-2 r
\end{equation}
From equation (\ref{6}) we know that if the noise type is uniform noise and the noise rate $r < 0.5$, inequality (\ref{3}) is true.\\

On structured noise, the noise transition matrix T is:
\begin{equation*}
	\begin{bmatrix} 
	   1 & 0 & 0&\dots & 0& 0  \\
	0 & 1&r&0 &\dots & 0\\
        r&0&1-r&0&\dots&0\\
	\vdots&\vdots& \ddots&\ddots&\ddots&\vdots \\
	0& r&0&\dots& 0 & 1-r
	\end{bmatrix}
\end{equation*}
In this case, $\sum_{i \neq \Tilde{y}}^{c-1} T_{h\left(\boldsymbol{x}_{1}\right), i}= 0\text{ or }r $, $\sum_{i \neq \Tilde{y}}^{c-1} T_{h\left(\boldsymbol{x}_{2}\right), i}=1-r \text{ or } 1$.
\begin{equation}\label{7}
    \sum_{i \neq y}^{c-1} T_{h\left(\boldsymbol{x}_{2}\right), i}-\sum_{i \neq y}^{c-1} T_{h\left(\boldsymbol{x}_{1}\right), i}=1-r \text { or } 1-2 r \text { or } 1
\end{equation}
From equation (\ref{7}) we know that if the noise type is uniform noise and the noise rate $r < 0.5$, inequality (\ref{3}) is true.\\
Now we consider inequality (\ref{4}), think of function $h(x) = xln\frac{2x}{1+x}-ln(1+x),x\in(0,1]$, 
\begin{equation}
    h^{'}(x) = ln(2x)-ln(1+x)
\end{equation}
\begin{equation}
    h^{''}(x) = \frac{1}{x(1+x)}>0
\end{equation}
So $f^{'}(x)$ monotonic increase in $(0,1]$, and having a zero point 1, that means $f^{'}(x)\leq 0$ on $(0,1]$, in other words, $f(x)$ monotonic decrement on $(0,1]$. Now we know that if $T_{f\left(\boldsymbol{x}_{i}\right), \mathrm{j}_{1}}>T_{h\left(\boldsymbol{x}_{i}\right), \mathrm{j}_{2}}$, 
$h(T_{f\left(\boldsymbol{x}_{i}\right), \mathrm{j}_{1}})<h(T_{f\left(\boldsymbol{x}_{i}\right), \mathrm{j}_{2}})$ because $T_{f\left(\boldsymbol{x}_{i}\right), \mathrm{j}}\in(0,1]$.\\
In addition, 
\begin{equation}
\begin{aligned}
    &T_{h\left(\boldsymbol{x}_{1}\right), \Tilde{y}} \log \frac{2 T_{h\left(\boldsymbol{x}_{1}\right),\Tilde{y}}}{1+T_{f\left(\boldsymbol{x}_{1}\right),\Tilde{y}}}-\log \left(1+T_{h\left(\boldsymbol{x}_{1}\right), \Tilde{y}}\right)\\
    &=T_{\Tilde{y}, \Tilde{y}} \log \frac{2 T_{\Tilde{y}, \Tilde{y}}}{1+T_{\Tilde{y}, \Tilde{y}}}-\log \left(1+T_{\Tilde{y},\Tilde{y}}\right)
\end{aligned}
\end{equation}
Under the diagonally-dominant condition 
\begin{equation}
    T_{ii}>max\{max_{j\neq i}T_{ij},max_{j\neq i}T_{ji}\},
\end{equation}
we can det inequality (\ref{4}) is true.
Now we have proof the inequality (\ref{3}) and \ref{4}, so the theorem is valid. 
\end{proof}